\documentclass[journal,twoside,web]{ieeecolor}
\usepackage{tmi}
\usepackage{cite}
\usepackage{amsmath,amssymb,amsfonts}
\usepackage{algorithmic}
\usepackage{graphicx}
\usepackage{textcomp}
\usepackage{graphicx} % 调整表格缩放
\usepackage{booktabs} % 优化表格线
\usepackage{multirow}
\usepackage{array}
\usepackage{rotating}
\usepackage{colortbl}
\usepackage{makecell}
\usepackage{hhline}
\usepackage{ragged2e}
\usepackage{algorithm}
\usepackage{hyperref}
\hypersetup{hidelinks,
	colorlinks=true,
	allcolors=black,
	pdfstartview=Fit,
	breaklinks=true}
\newtheorem{theorem}{Theorem}  
\newenvironment{proof}[1][Proof]{%
	\par\indent\textit{#1:} \ignorespaces
}{%
	\hfill$\blacksquare$
}
\makeatletter
\def\BibTeX{{\rm B\kern-.05em{\sc i\kern-.025em b}\kern-.08em
    T\kern-.1667em\lower.7ex\hbox{E}\kern-.125emX}}
\begin{document}
\title{FrequencyCT: Frequency Domain Self-supervised Low-dose CT Denoising}
\author{Guoquan Wei, Liu Shi, Chong Chen, Qiegen Liu, \IEEEmembership{Senior Member, IEEE}
	\thanks{This work was supported by the National Natural Science Foundation of China (Grant: 621220033, 12322117 and 12288201), National Key Research and Development Program of China (Grant: 2023YFA1009300), Nanchang University Youth Talent Training Innovation Fund Project (Grant: XX202506030012), Early-Stage Young Scientific and Technological Talent Training Foundation of Jiangxi Province (Grant: 20252BEJ730005). (G. Wei is the first author.) (Co-corresponding authors: Q. Liu and L. Shi.)}
	\thanks{G. Wei, L. Shi, and Q. Liu are with the School of Information
		Engineering, Nanchang University, Nanchang 330031, China (email: \{liuqiegen, shiliu\}@ncu.edu.cn, guoquanwei@email.ncu.edu.cn).}
	\thanks{C. Chen is with the SKLMS, ICMSEC, Academy of Mathematics and Systems Science, Chinese Academy of Sciences, Beijing 100190, China (email: chench@lsec.cc.ac.cn).}
}
\maketitle
\begin{abstract}
	Despite extensive research on computed tomography (CT) denoising, few studies exploit projection-domain data characteristics to mitigate noise correlation. To bridge this gap, this work proposes FrequencyCT, the first zero-shot self-supervised method for pseudo-sample generation in the frequency domain for low-dose CT denoising. Specifically, by exploiting the distinct frequency-domain distributions of noise and true signal, a regional low-frequency anchoring technique is proposed. Applying phase-preserving noise and mask perturbations to the high-frequency region generates pseudo-samples for self-supervision. Driven by the exponential correlation between noise variance of noisy projections and the underlying true signal, consistent data truncation is applied to the generated samples to stabilize optimization gradients. Evaluation results on multiple public and real datasets confirm the clinical application potential of this research, which provides an innovative perspective for the field of denoising. The code is available at: \url{https://github.com/yqx7150/FrequencyCT}.
\end{abstract}

\begin{IEEEkeywords}
	Low-dose CT, zero-shot, frequency domain pseudo-sample generation, truncation training.
\end{IEEEkeywords}

\section{Introduction}
\IEEEPARstart{C}{omputed} tomography (CT) has become a powerful tool in modern medical diagnosis \cite{brenner2007computed}. To reduce the risk of radiation-induced cancer, low-dose CT (LDCT) is the preferred method used by medical institutions \cite{smith2009radiation}. However, reducing the radiation dose inherently limits photon detection and amplifies electronic noise, a persistent challenge despite the mitigating effects of emerging photon-counting CT (PCCT) \cite{meloni2023photon}. Unlike stochastic noise in natural images, the noise in LDCT, following logarithmic transformation and filtered back projection (FBP), typically manifests as highly correlated, globally extended streak artifacts. This structured noise severely degrades soft tissue contrast, obscures minute pathologies, and compromises downstream analysis \cite{bosch2023risk}. Consequently, developing robust and efficient denoising algorithms for low-dose scenarios has become an urgent clinical necessity to guarantee diagnostic fidelity under strictly constrained radiation environments.

Denoising research on CT has achieved substantial progress through various methodological paradigms. Early explorations primarily relied on traditional filtering and iterative algorithms \cite{wen2008iterative}, such as non-local means \cite{buades2011non}, dictionary learning \cite{dong2011sparsity}, total variation \cite{vogel1996iterative}, and BM3D \cite{dabov2007image}. While these approaches exhibit adequate denoising performance, they are inherently constrained by operational assumptions. Furthermore, these methods typically require extensive empirical parameter tuning and prolonged iterative optimization times, which collectively hinder their deployment in emergency clinical scenarios. Supervised deep learning has accelerated field development by mapping noisy inputs to clean targets \cite{shan2019competitive}, but the heavy reliance on precisely paired data continues to bottleneck these approaches. More recently, diffusion models \cite{yang2023diffusion,10776993,10506793} have elevated unsupervised denoising to unprecedented levels. These generative frameworks still require further optimization to address extended training durations. To reduce reliance on external reference data, self-supervised learning \cite{lehtinen2018noise2noise,niu2022noise} achieves efficient denoising by deeply mining the statistical properties of large-scale unaligned noisy observations. However, this pre-training-dependent self-supervised paradigm still requires extensive additional noisy data for training.

In the evolution of self-supervised learning, zero-shot denoising has garnered significant attention due to its minimal reliance on training samples \cite{jaiswal2020survey}. Early explorations in this field primarily leveraged the deep image prior (DIP) \cite{ulyanov2018deep} mechanism, whose core idea is to exploit the inductive bias inherent in network architectures to constrain the denoising process. Subsequently, ZS-N2N \cite{mansour2023zero} constructed supervised samples via local diagonal pixel downsampling, introducing an unavoidable distribution shift between the training and testing phases. More recently, Pixel2Pixel \cite{ma2025pixel2pixel} generates a large number of pseudo-instances by deeply mining non-local self-similarity and introducing a random sampling strategy, thereby partially breaking the spatial noise correlation bottleneck. However, these statistically driven approaches are primarily restricted to the image domain, failing to leverage the projection domain perspective that is more conducive to decoupling noise. As illustrated in Fig. \ref{fig_projection} (a), their local averaging and similarity fail to effectively break noise correlations when confronting projection signals characterized by strong signal dependencies and highly non-stationary fluctuations. Fortunately, a growing body of recent research has emerged to leverage CT imaging mechanism for projection-domain denoising. For example, Choi \emph{et al.} \cite{choi2023self} exploited the geometric correlation between adjacent cone-beam CT projections for self-supervised constraints.  Unal \emph{et al.} \cite{Unal2024Proj2ProjSL} utilized physical blind zones in sinograms, but the frequent invocation of forward- and back-projection operators compromises computational efficiency. Meanwhile, An \emph{et al.} \cite{An_2024} partitioned detector arrays for downsampling, whereas Shi \emph{et al.} \cite{shi2025zeroshotlowdosectdenoising} leveraged projection redundancy from the conjugate theorem to generate pseudo-samples. As shown in Fig. \ref{fig_projection}~(b), these works primarily focus on suppressing noise by exploiting the inherent physical redundancy in imaging geometry. However, such methods still have obvious limitations. Their idealized assumptions about the imaging scanning trajectory and high sensitivity to data interpolation fluctuations hinder their widespread applicability. Nevertheless, their deep exploration of the underlying CT imaging mechanism points the way forward for the denoising field. This motivates researchers to explore a more stable domain to thoroughly decouple noise correlations.
\begin{figure}[!htbp]
	\centering
	\includegraphics[width=\linewidth]{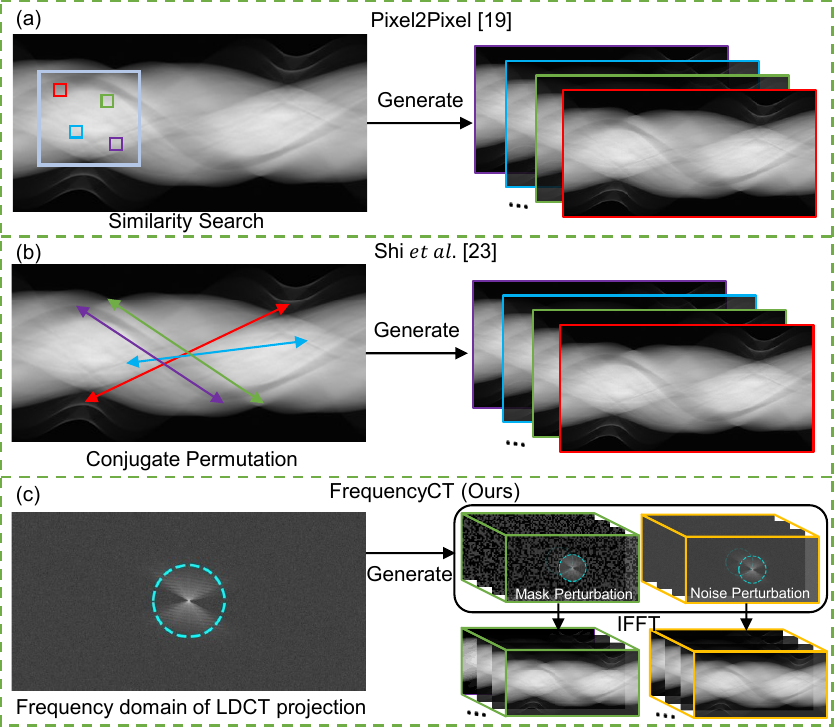}
	\caption{Schematic illustration of different zero-shot self-supervised denoising methods. (a) Schematic illustration of applying image-domain statistical pseudo-sample generation methods such as Pixel2Pixel \cite{ma2025pixel2pixel} directly to the projection domain. (b) Schematic illustration of generating pseudo-samples for denoising utilizing projection imaging mechanisms such as the approach by Shi \emph{et al.} \cite{shi2025zeroshotlowdosectdenoising}. (c) Our proposed FrequencyCT for pseudo-sample generation utilizing frequency-domain noise characteristics in the projection domain.}
	\label{fig_projection}
\end{figure}

Based on the above analysis, this work proposes FrequencyCT, a zero-shot self-supervised framework operating in Fourier domain. As illustrated in Fig. \ref{fig_projection} (c), unlike the aforementioned methods, it leverages the distinct frequency-domain distributions of noise and true signal to generate pseudo-samples, effectively decoupling spatial noise correlations while preserving fine details. This work analyzes high- and low-frequency components in the projection domain, recognizing that the high-frequency region is highly susceptible to noise and exploiting it to construct pseudo-samples. Specifically, a large number of pseudo-samples are generated by perturbing the high-frequency region through phase-preserving noise perturbation (PPNP) and phase-preserving mask perturbation (PPMP). These samples, generated by both mechanisms, have the same anatomical information macroscopically, and the orthogonal generation strategy breaks the spatial noise correlation. Simultaneously, theoretical derivations demonstrate that consistent truncation of training inputs and targets effectively isolates extreme outliers. This prevents high-variance fluctuations from disrupting gradient optimization, thereby facilitating robust network convergence. Experimental results confirm that the proposed method demonstrates effectiveness and robustness in a wide range of experiments, including simulated noise, real clinical data, and collected mouse data.

The main contributions of this work are as follows:
\begin{itemize}{}{}
	\item{We propose FrequencyCT, a novel zero-shot denoising method featuring an innovative frequency-domain pseudo-sample generation strategy. By leveraging the distinct frequency-domain distributions of noise and true signal, this method generates pseudo-samples through noise and mask perturbations on the original noisy data, effectively decoupling spatial noise correlations.}
	\item{We theoretically verify that the noise variance in LDCT projections escalates exponentially with the value of the underlying true signal. This property motivates a consistent data truncation on the generated samples before training, which effectively stabilizes optimization gradients and accelerates network convergence.}
\end{itemize}

The rest of this paper is organized as follows: Section \ref{Methodology} describes a new zero-shot denoising method, FrequencyCT. Section \ref{Experiments} presents the experimental results. Finally, Sections \ref{Discussion and Conclusion} and \ref{Conclusion} discuss and conclude the entire study.

%\begin{figure*}[!htbp]
%	\centering
%	\includegraphics[width=\linewidth]{figures/fig1.pdf}
%	\caption{The overall flowchart of FrequencyCT. (a) Description of generating pseudo-samples using noise perturbation and mask perturbation. (b) Detailed description of data truncation training and inference.}
%	\label{fig1}
%\end{figure*}

\section{Methodology}
\label{Methodology}
Our methodological analysis consists of three key components: An analysis of the distribution characteristics of noise and true signal in the frequency domain (Sec. \ref{Motivation}), a pseudo-sample generation strategy utilizing phase-preserving noise and mask perturbation motivated by this analysis (Sec. \ref{Proposed FrequencyCT}), and a theoretical proof demonstrating that projection-domain truncated training isolates extreme statistical outliers to enhance network optimization (Sec. \ref{Theoretical Analysis}).
\subsection{Motivation}
\label{Motivation}
To overcome noise correlation in the projection domain, this work shifts the perspective to the frequency domain, where true signals and noise exhibit distinct distributions. For a given two-dimensional LDCT projection matrix $p_{ld}\in\mathbb{R}^{H\times W}$, its corresponding two-dimensional discrete Fourier transform can be expressed as:
\begin{equation}
	\begin{aligned}
		P_{ld}(u,v)&=\mathcal{F}(p_{ld})
		\\&=\frac{1}{\sqrt{HW}}\sum_{h=0}^{H-1}\sum_{w=0}^{W-1}p_{ld}(h,w)e^{-j2\pi\left(\frac{uh}{H}+\frac{vw}{W}\right)},
	\end{aligned}
	\label{eq1}
\end{equation}
where $P_{ld}\left(u,v\right)$ is the complex spectrum. In polar coordinates, the spectrum can be strictly orthogonally decoupled into the amplitude spectrum $A\left(u,v\right)$ and the phase spectrum $\mathrm{\Phi}\left(u,v\right)$:
\begin{equation}
	P_{ld}(u,v)=A(u,v)\cdot e^{j\Phi(u,v)}.
	\label{eq2}
\end{equation}

From the perspective of Fourier analysis, a well-established consensus dictates that the global topology, edge locations, and anatomical contours of an image are highly compressed and locked within the phase spectrum $\Phi$, while the intensity of energy in each frequency band and most random noise are primarily reflected in the amplitude spectrum $A$. Xian \emph{et al.} \cite{10021672} applied this property to medical segmentation. This fundamental property inspires us to maintain strict phase invariance while perturbing the amplitude. Such an approach guarantees that all generated pseudo-samples remain absolutely aligned anatomically and semantically, enabling the design of amplitude-based perturbation operators to effectively decouple spatial noise correlations.

Meanwhile, to construct the desired unbiased noisy sample bank, Parseval's theorem \cite{hughes1965physical} is introduced. This theorem establishes that the total energy of the signal in the data domain is conserved from its total energy in the frequency domain, described as:
\begin{equation}
	\sum_{h=0}^{H-1}\sum_{w=0}^{W-1}\left|p_{ld}(h,w)\right|^2 = \frac{1}{HW}\sum_{u=0}^{H-1}\sum_{v=0}^{W-1}\left|A(u,v)\right|^2,
	\label{eq3}
\end{equation}
this ensures that when the noise distribution in the high-frequency region is perturbed in the frequency domain, the generated target samples do not experience significant drift in global statistical properties or macroscopic contrast. By combining the frequency domain decoupling characteristics with the law of energy conservation, we can construct a sample bank with topological consistency and noise decoupling without the need for additional paired data.

In addition to the frequency-domain analysis described above, we conducted more in-depth experiments, as shown in Fig. \ref{fig2}. Based on the power-law attenuation characteristics in the spectrum, most clean information is concentrated in the low-frequency region $A_{low}$, while noise and other detail-degrading components are concentrated in the high-frequency region $A_{high}$. This finding inspired us to propose a novel self-supervised denoising strategy: Anchoring both the frequency-domain phase $\Phi$ and the low-frequency region $A_{low}$, while perturbing the amplitude of $A_{high}$. Consequently, this strategy minimizes damage to underlying anatomical semantics while effectively decoupling spatial noise correlations.
\subsection{Proposed FrequencyCT}
\label{Proposed FrequencyCT}
In this subsection, we first introduce two perturbation operators for sample generation (Sec.~\ref{Generation of Pseudo-sample}), then describe the training and inference processes (Sec.~\ref{Training and Inference}).
\subsubsection{Pseudo-sample Generation}
\label{Generation of Pseudo-sample}
\begin{figure}[!htbp]
	\centering
	\includegraphics[width=\linewidth]{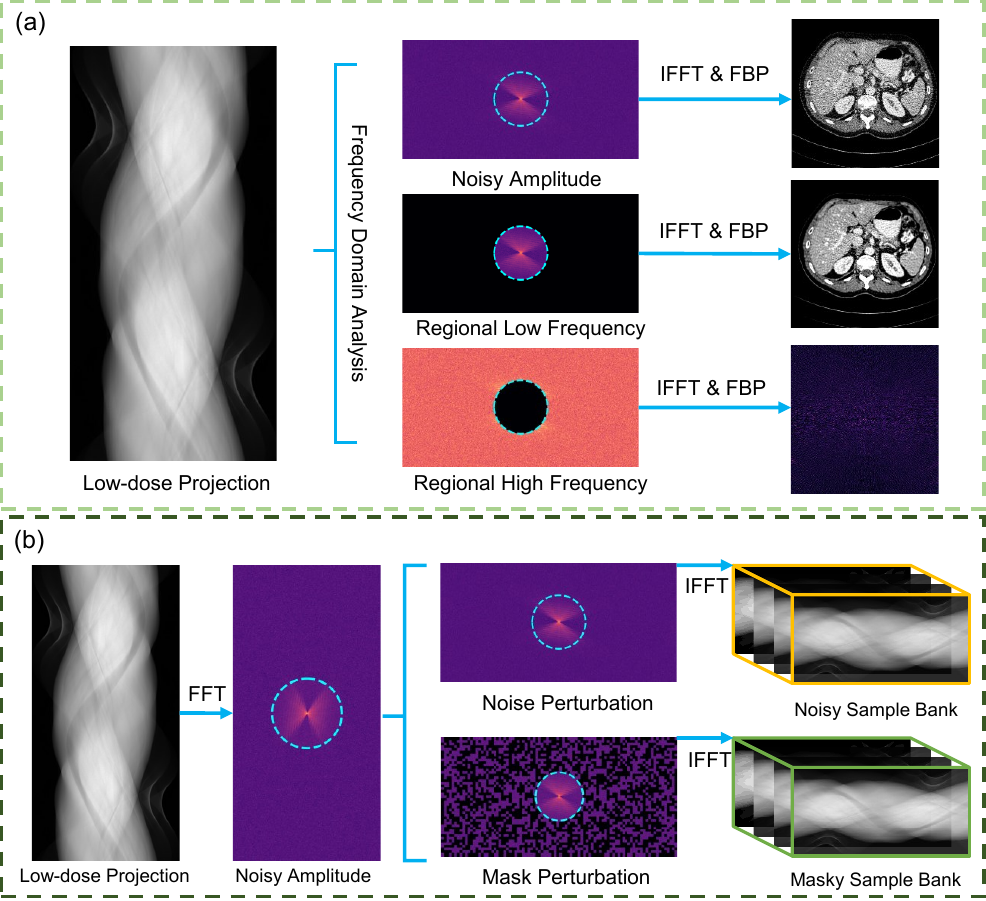}
	\caption{Motivation and description of frequency-domain pseudo-sample generation. (a) Illustration of noise distribution characteristics in the frequency domain. (b) A brief overview of pseudo-sample generation via high-frequency region perturbations outside the anchor circle. }
	\label{fig2}
\end{figure}
Based on the above motivation, to construct a self-supervised pseudo-sample bank that effectively decouples noise, this work defines the polar radius in polar coordinates as $\rho(u,v)=\sqrt{u^2+v^2}$, the region protection limit as $r_{min}$, and a dynamic perturbation limit $R_{rand}\sim \mathcal{U}(r_{1},r_{2})$ that follows a uniform distribution, satisfying $r_{min}=r_{1}<r_{2}$. Based on this, two perturbation operators are defined:

\textbf{Phase-Preserving Noise Perturbation (PPNP).} While ensuring the anchoring of regional low-frequency physical information $A_{low}^{\rho\leq R_{rand}}$, a centrosymmetric random perturbation matrix $Z$ is applied to the high-frequency region $A_{high}^{\rho>R_{rand}}$, satisfying $\mathbb{E}\left[Z\right]=1$. The generated samples can be described as:
\begin{equation}
	p_{ld}^{noise}=\mathcal{F}^{-1}\left(\left(A_{low}^{\rho\leq R_{rand}}+Z\odot A_{high}^{\rho>R_{rand}}\right)e^{j\Phi}\right),
	\label{eq4}
\end{equation}
where $\mathcal{F}^{-1}\left(\cdot\right)$ represents the inverse Fourier transform, and $p_{ld}^{noise}$ represents the noise-perturbated projection domain pseudo-sample. It maintains the existence of the high-frequency topology but applies random perturbations to the energy amplitude, reconstructing the noise distribution. Furthermore, if different perturbations are applied simultaneously, a massive number of samples $\mathcal{T}_{noise}=\{p_{ld}^{noise_{1}},\ldots,p_{ld}^{noise_{n}}\}$ can be generated in a very short time, weakening the correlation between noises.

\textbf{Phase-Preserving Mask Perturbation (PPMP).} Similarly, while ensuring the regional low-frequency anchorage $A_{low}^{\rho\le r_{min}}$, a centrosymmetric Bernoulli-compliant binary mask $B\sim\mathrm{Bernoulli}(\beta)$ is applied to the high-frequency region $A_{high}^{\rho>r_{min}}$. The generated samples can be described as:
\begin{equation}
	p_{ld}^{mask}=\mathcal{F}^{-1}\left(\left(A_{low}^{\rho\leq r_{min}}+B\odot A_{high}^{\rho>r_{min}}\right)e^{j\Phi}\right),
	\label{eq5}
\end{equation}
where $\beta$ represents the regional high-frequency retention probability. Based on masking operations, high-frequency noisy data can be randomly truncated. After transformation back to the projection domain, long-range coherent streak noise is substantially disrupted, and the spatial correlation of noise can be broken. Similarly, a masky sample bank $\mathcal{T}_{mask}=\{p_{ld}^{mask_{1}},\ldots,p_{ld}^{mask_{n}}\}$ can be generated highly efficiently.

\begin{figure}[!htbp]
	\centering
	\includegraphics[width=\linewidth]{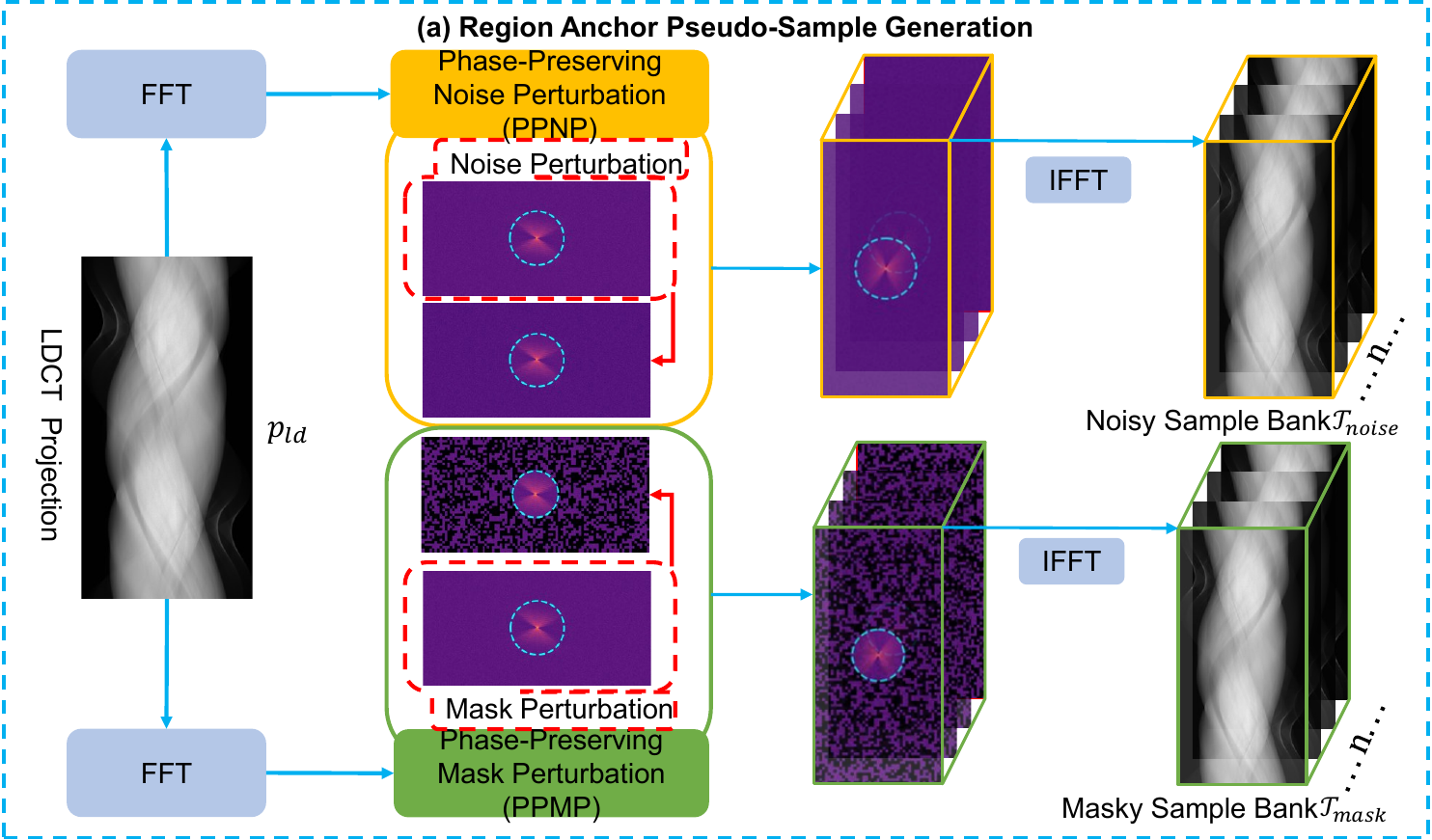}
	\caption{Overview of generating pseudo-samples. From left to right: After obtaining the LDCT projection $p_{ld}$, perform a Fourier transform on it to the frequency domain, and apply PPNP and PPMP at the same time to generate noisy sample bank $\mathcal{T}_{noise}$ and masky sample bank $\mathcal{T}_{mask}$.}
	\label{fig1_generation}
\end{figure}
By applying distinct perturbations to maximally isolated noise, both operators effectively decouple spatial noise correlations while preserving identical macroscopic anatomical information. A more detailed description of pseudo-sample generation is shown in Fig. \ref{fig1_generation}.
\subsubsection{Training and Inference}
\label{Training and Inference}
As described above, after obtaining pseudo-samples generated by different perturbation operators, they are randomly selected as the network training input $p_{ld}^{mask}$ and the target $p_{ld}^{noise}$, respectively. Previous studies \cite{ma2025pixel2pixel,Unal2024Proj2ProjSL} have defined the $l_2$ norm objective function $\mathbb{E}\parallel \cdot \parallel_2^2$ used to train the denoising network $f_\theta(\cdot)$ as follows:
\begin{equation}
	\mathbb{E}\parallel f_\theta\left(p_{ld}^{mask}\right)-p_{ld}^{noise}\parallel_2^2,
	\label{eq6}
\end{equation}
however, directly optimizing Eq. \eqref{eq6} can lead to unstable gradients. Dictated by the underlying physical mechanisms of CT, regions with higher true projection values are rendered highly susceptible to statistical fluctuations. Such signal-dependent extreme outliers can severely disrupt the training process. To mitigate this and stabilize the training process, we introduce a data truncation mechanism. Specifically, the generated pseudo-samples are clamped before training:
\begin{equation}
	p_{ld_{c}}^{mask}=\mathrm{clamp}\left(p_{ld}^{mask},0,T\right),p_{ld_{c}}^{noise}=\mathrm{clamp}\left(p_{ld}^{noise},0,T\right),
	\label{eq7}
\end{equation}
where $\mathrm{clamp}(\cdot)$ bounds pseudo-samples within $[0, T]$. Setting $T=1$, the robust objective for FrequencyCT is refined as:
\begin{equation}
	\mathbb{E}\parallel f_\theta\left(p_{ld_c}^{mask}\right)-p_{ld_c}^{noise}\parallel_2^2.
	\label{eq8}
\end{equation}

To parameterize the denoising network $f_\theta(\cdot)$, we employ a compact convolutional neural network (CNN) architecture. Specifically, the network is formulated as a 5-layer CNN, where each layer consists of a $3 \times 3$ convolution followed by a ReLU activation function. 

The optimal convolutional network $f_{\theta^*}(\cdot)$ is obtained by training via Eq. \eqref{eq8}. During the inference phase, the full-range LDCT projection $p_{ld}$ is directly fed into the optimized network $f_{\theta^*}(\cdot)$ to yield the denoised data $p$. Finally, the denoised reconstructed image $x_0$ is obtained by $\mathrm{FBP}(p)$. A description of truncated training and inference is shown in Fig. \ref{fig1_train}. The overall workflow of FrequencyCT is illustrated in Algorithm \ref{alg:A1}.
\begin{figure}[!htbp]
	\centering
	\includegraphics[width=\linewidth]{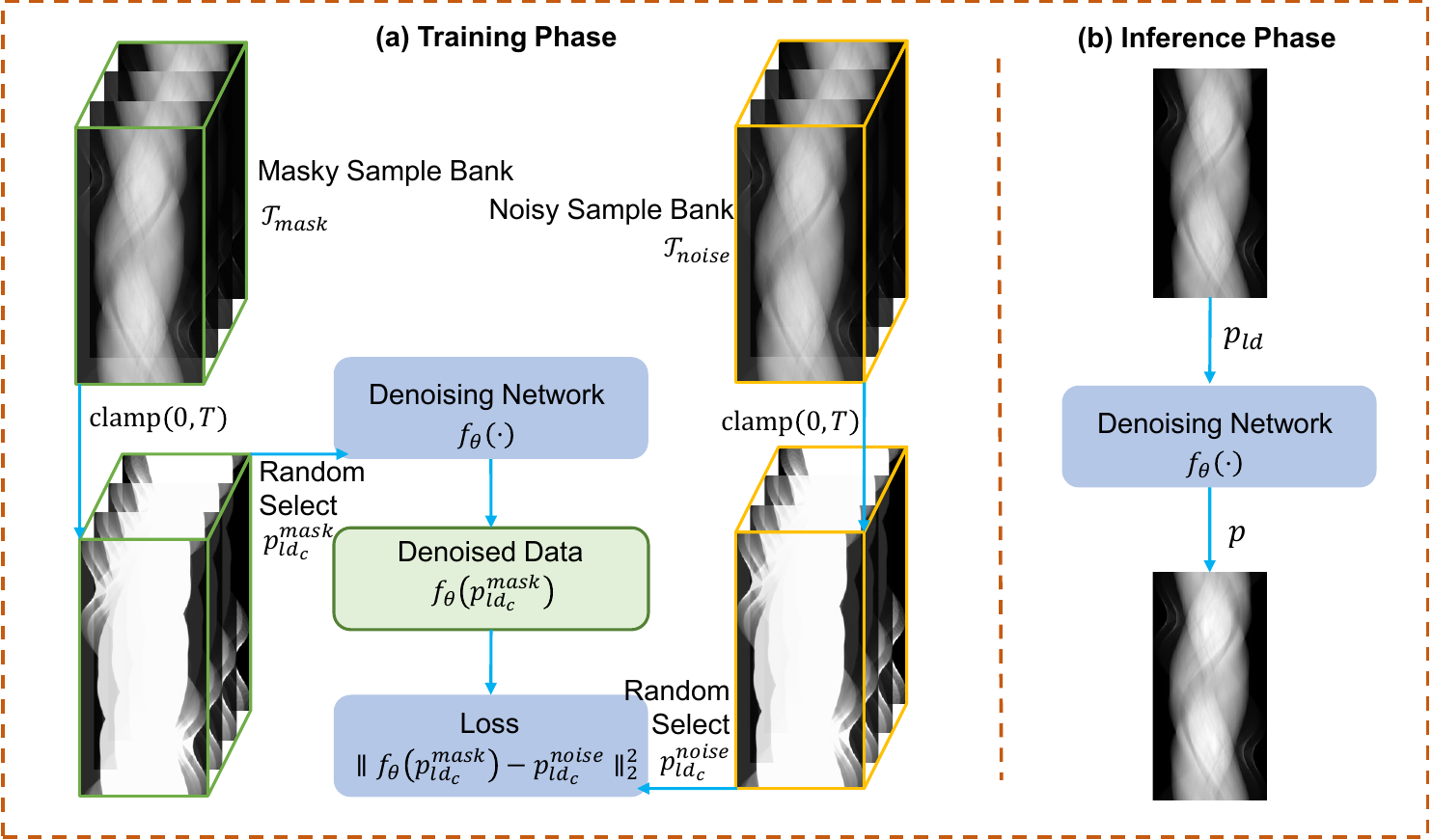}
	\caption{Network training and inference. (a) Training phase: Before training, the generated samples are truncated via Eq. \eqref{eq7}, and $p_{ld_c}^{noise}$ and $p_{ld_c}^{mask}$ are randomly selected for training. (b) Inference phase: The trained network optimizes the original noisy data.}
	\label{fig1_train}
\end{figure}
\begin{algorithm}[!htbp]
	\caption{Training and Inference Process of FrequencyCT}
	\label{alg:A1}
	%\hrule
	%\vspace{2mm}
	
	%\color{blue} % 如果需要标蓝请保留此行，否则可以直接删除
	
	%\hangindent %负责悬挂缩进，\hangafter=1 表示从第二行开始缩进
	\noindent\hangindent=3.5em\hangafter=1
	\textbf{Input:} Single LDCT projection data $p_{ld}$, dynamic perturbation range $R_{rand}$, number of samples $n$ \par
	
	\noindent\hangindent=4.2em\hangafter=1
	\textbf{Output:} Denoised data $p$ and image $x_0$ \par
	
	\noindent \textbf{Training Phase:} \\
	1: Generate samples $p_{ld}^{noise}$ and $p_{ld}^{mask}$ via Eqs. \eqref{eq4} and \eqref{eq5}\\
	2:  Truncate samples to obtain $p_{ld_c}^{noise}$ and $p_{ld_c}^{mask}$ via Eq. \eqref{eq7}\\
	3:  \textbf{Repeat} \\
	4:  \quad Update $f_\theta(\cdot)$ by Eq. \eqref{eq8} \\
	5:  \textbf{Until} convergence to $f_{\theta^*}(\cdot)$
	
	\noindent \textbf{Inference Phase:} \\
	6:  $p \leftarrow f_{\theta^*}(p_{ld})$ \\
	7:  $x_0 = \mathrm{FBP}(p)$
	%\vspace{2mm}
	
	%\hrule
\end{algorithm}
\subsection{Theoretical Analysis}
\label{Theoretical Analysis}
\begin{figure*}[!htbp]
	\centering
	\includegraphics[width=\linewidth]{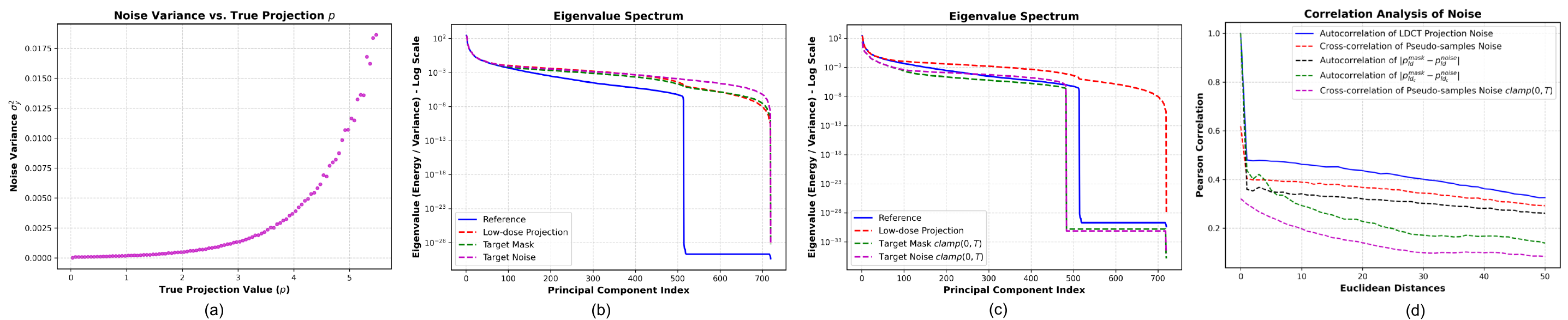}
	\caption{Experimental verification of the relationship between the noise variance of noisy projections and the true projection values, alongside an analysis of PCA and noise correlations before and after truncation. (a) Experimental validation of Eq. \eqref{eq9} via LDCT projection and clean data. (b) PCA of pseudo-samples generated by different perturbation operators before truncation, along with source noisy and reference data. (c) PCA of pseudo-samples generated by different perturbation operators after truncation, along with source noisy and reference data. (d) Analysis of the impact of pseudo-samples and truncation on noise correlation.}
	\label{fig3}
\end{figure*}

In this subsection, we use a theorem to explain why the truncation operation in Eq. \eqref{eq7} enables convergence in network training and denoising across the entire numerical range during inference.
\begin{theorem}
	\label{theorem}
	In LDCT projection domain, assuming the number of photons received by the $i$-th detector bin follows a Poisson distribution with an incident photon intensity $I_0$, let $I_i$ denote the transmitted photon count penetrating the object, $y_i$ denote the noisy observation value with variance $\sigma_{y_i}^2$, and $p_i$ denote the corresponding clean projection value. then there holds:
	\begin{equation}
		\sigma_{y_i}^2\approx\frac{\exp(p_{i})}{I_0},
		\label{eq9}
	\end{equation}
the above formulation indicates that under noisy conditions, the noise variance and the true projection value at each individual detector element are exponentially positively correlated.
\end{theorem}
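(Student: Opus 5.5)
The argument is a delta-method (first-order Taylor) computation applied to the Beer--Lambert log transform. I would make the measurement model explicit first. The transmitted count at the $i$-th bin is modeled as $I_i\sim\mathrm{Poisson}(\bar I_i)$, with mean given by Beer--Lambert, $\bar I_i = I_0\exp(-p_i)$, where $p_i$ is the clean line integral. The noisy projection is obtained by the usual log transform, $y_i = -\ln(I_i/I_0) = \ln I_0 - \ln I_i$. So $y_i$ is a smooth function $g(I_i)$ of a Poisson variable, with $g(x)=\ln I_0-\ln x$.

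The key step is to linearize $g$ around the mean $\bar I_i$:
\begin{equation*}
y_i \approx g(\bar I_i) + g'(\bar I_i)\,(I_i-\bar I_i) = p_i - \frac{I_i-\bar I_i}{\bar I_i}.
\end{equation*}
Taking the variance and using the Poisson property $\mathrm{Var}(I_i)=\bar I_i$ gives
\begin{equation*}
\sigma_{y_i}^2 \approx \frac{\mathrm{Var}(I_i)}{\bar I_i^{2}} = \frac{1}{\bar I_i} = \frac{\exp(p_i)}{I_0},
\end{equation*}
which is \eqref{eq9}. The linearization also gives $\mathbb{E}[y_i]\approx p_i$, so the clean value $p_i$ is consistently the centre of the noisy observation.

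The step I expect to need the most care is justifying the linearization. The remainder term is controlled by the relative fluctuation $(I_i-\bar I_i)/\bar I_i$, which has standard deviation $\bar I_i^{-1/2}$. I would state the result as a first-order approximation valid when $\bar I_i = I_0 e^{-p_i}\gg 1$. To do this, I would bound the second-order term using the Poisson central moments: the next correction to the variance is $O(\bar I_i^{-2})$. I would also note that the event $I_i=0$, where the logarithm is undefined, has probability $e^{-\bar I_i}$, which is negligible in this regime; in practice it is handled by clipping.

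This also explains the "$\approx$" in the statement. The approximation degrades exactly when $p_i$ is large, which is the same high-attenuation regime where the variance blows up exponentially. I would conclude by noting that $\exp(p_i)/I_0$ is strictly increasing and convex in $p_i$. This is the exponential positive correlation claimed in the theorem, and it motivates the truncation in \eqref{eq7}.
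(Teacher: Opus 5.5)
Your proposal is correct and follows essentially the same route as the paper: the Beer--Lambert mean $I_0 e^{-p_i}$, the Poisson mean--variance identity, a first-order Taylor expansion of $-\ln(\cdot/I_0)$ about the mean, and variance propagation giving $1/\bar I_i = \exp(p_i)/I_0$. Your extra remarks on when the linearization is valid ($\bar I_i \gg 1$, the $O(\bar I_i^{-2})$ correction, and the negligible $I_i=0$ event) go beyond what the paper states, but they are consistent with it.
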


\begin{proof}
	In CT projection domain, the Beer-Lambert law \cite{swinehart1962beer} defines the transmitted intensity as $I_i=I_0\exp(-p_i)$. Driven by the quantum nature of the photon-counting process, the random variable $N_i$ recorded by the $i$-th detector element operates as a Poisson-distributed variable, satisfying $N_i \sim \text{Poisson}(I_i)$. The inherent property of the Poisson distribution dictates that its expectation strictly equals its variance, then:
	\begin{equation}
		\mathbb{E}[N_i]=\mathbb{V}[N_i]=I_i=I_0\exp(-p_i).
		\label{eq10}
	\end{equation}
	By applying Eq. \eqref{eq10} inversely, under noisy measurement conditions, $y_i$ can be defined as a nonlinear mapping function with respect to $N_i$:
	\begin{equation}
		y_i=f(N_i)=-\ln\left(\frac{N_i}{I_0}\right)=-\ln(N_i)+\ln(I_0).
		\label{eq11}
	\end{equation}
	Due to the nonlinearity of $f(N_i)$, the Taylor formula \cite{odibat2007generalized} is introduced and expanded at the mean value $\mu_i=\mathbb{E}[N_i]$:
	\begin{equation}
		f(N_i)\approx f(\mu_i)+f^{\prime}(\mu_i)(N_i-\mu_i).
		\label{eq12}
	\end{equation}
	Applying the constant scaling property, the variance of both sides is given:
	\begin{equation}
		\begin{aligned}
			\mathbb{V}[f(N_i)] &\approx \mathbb{V}[f(\mu_i)+f^{\prime}(\mu_i)(N_i-\mu_i)] \\
			&\approx \mathbb{V}[f^{\prime}(\mu_i)(N_i-\mu_i)] \\
			&\approx [f^{\prime}(\mu_i)]^{2}\cdot\mathbb{V}[N_i-\mu_i]\\
			&\approx [f^{\prime}(\mu_i)]^{2}\cdot\mathbb{V}[N_i].
		\end{aligned}
		\label{eq13}
	\end{equation}
	According to $f^\prime\left(N_i\right)=-\frac{1}{N_i}$, $\mu_i=\mathbb{E}\left[N_i\right]$, and substituting Eq. \eqref{eq10} and \eqref{eq11} into the above equation, it yields:
	\begin{equation}
		\begin{aligned}
			\mathbb{V}[y_i]  \approx\sigma_{y_i}^{2}\approx\left(-\frac{1}{I_{0}\mathrm{exp}\left(-p_i\right)}\right)^{2}\cdot\left(I_{0}\mathrm{exp}\left(-p_i\right)\right) 
			\approx\frac{\exp(p_i)}{I_0}.
		\end{aligned}
		\label{eq14}
	\end{equation}
	This completes the proof, and Fig. \ref{fig3} (a) have also been experimentally verified.
\end{proof}

As described by Eq. \eqref{eq9}, the variance of noise increases exponentially with the increase of the true projection value $p_i$, meaning that the higher $p_i$, the stronger the data volatility and the lower local $\mathrm{SNR}$. For a specific detector element $i$, the local $\mathrm{SNR}$ is defined as:
\begin{equation}
	\mathrm{SNR}_i=\frac{p_i}{\sqrt{\mathbb{V}[y_i]}},
	\label{eq15}
\end{equation}
substituting Eq. \eqref{eq14} into the above equation, it yields:
\begin{equation}
	\mathrm{SNR}_i=\frac{p_i}{\sqrt{\frac{\exp{(p_i)}}{I_0}}}=p_i\sqrt{I_0}\exp\left(-\frac{p_i}{2}\right).
	\label{eq16}
\end{equation}
Analysis of the above equation shows that $p_i=2$ is the extreme point of $\mathrm{SNR}_i$, and subsequently, as $p_i\rightarrow\infty$, $\mathrm{SNR}_i\rightarrow0$. This implies that regions with larger true projection values are inevitably dominated by massive random noise, where authentic anatomical signals are gradually buried. According to Neighbor2Neighbor \cite{huang2021neighbor2neighbor}, the following lemma holds:
\begin{equation}
	\begin{aligned}
		\mathbb{E}_{p,y}\parallel f_\theta(y)-p\parallel_2^2 =\mathbb{E}_{p,y,z}\parallel f_\theta(y)&-z\parallel_2^2-\sigma_z^2 \\
		& +2\boldsymbol{\varepsilon}\mathbb{E}_{p,y}(f_\theta(y)-p),
	\end{aligned}
	\label{eq17}
\end{equation}
where ${\boldsymbol{\varepsilon}}$ represents the difference in the underlying true signal between different noisy observations $z$ and $y$. When ${\boldsymbol{\varepsilon}} \rightarrow 0$, the self-supervised target is equivalent to supervised learning using clean data $p$. However, driven by Theorem \ref{theorem}, the true signals in high-value projection regions are masked by highly fluctuating non-stationary noise, which disrupts the optimization gradient of the network parameters $\theta$, even forcing the network to overfit the noise to generate an identity mapping. Therefore, this work uses Eq. \eqref{eq7} to truncate the data to isolate highly fluctuating noisy data, so that steering the network optimization toward regions with high signal fidelity to capture stable gradients. Furthermore, the principal component analysis (PCA) \cite{mackiewicz1993principal} in Fig. \ref{fig3} (b) and (c) demonstrates that samples from different perturbation operators are orthogonal, effectively breaking noise correlations. As additionally evidenced by Fig. \ref{fig3} (d), data truncation substantially decouples noise correlations, yielding the optimal training operator $f_{\theta^\ast}(\cdot)$.
\begin{table*}[!htbp]
	\centering
	\caption{Quantitative results for different datasets and doses (mean ± standard deviation). The best results are highlighted in bold.}
	\label{table1}
	\resizebox{\linewidth}{!}{%
		\begin{tabular}{cccc|ccc|ccc} 
			\toprule
			\multirow{2}{*}{Method} & \multicolumn{3}{c|}{Mayo2016 Dataset - 25\% Dose}               & \multicolumn{3}{c|}{Mayo2016 Dataset - 10\% Dose}                                      & \multicolumn{3}{c}{LIDC-IDRI Dataset - 25\% Dose}                 \\ 
			\cline{2-10}
			& PSNR(dB)~$\uparrow$ & SSIM(\%)~$\uparrow$ & RMSE~$\downarrow$   & PSNR(dB)~$\uparrow$ & SSIM(\%)~$\uparrow$ & RMSE~$\downarrow$                          & PSNR(dB)~$\uparrow$  & SSIM(\%)~$\uparrow$ & RMSE~$\downarrow$    \\ 
			\hline
			FBP                      & 37.76±2.27          & 89.72±5.24          & 26.78±7.31          & 34.66±2.52          & 80.69±8.96          & 38.62±11.73                                & 36.99±3.49           & 81.05±10.79         & 30.63±12.32          \\
			BM3D                     & 40.67±1.03          & 96.36±1.52          & 18.68±2.17          & 39.65±1.64          & 94.48±2.68          & 21.16±4.54                                 & 40.78±1.69           & 89.95±4.30          & 18.54±4.35           \\
			B2U                      & 39.85±1.89          & 94.58±2.58          & 20.84±4.64          & 36.40±2.45          & 86.44±6.83          & 31.53±9.50                                 & 40.03±2.87           & 88.21±6.29          & 21.08±7.55           \\
			Neighbor2Neighbor        & 40.12±1.87          & 94.68±2.60          & 20.20±4.54          & 38.29±2.13          & 91.32±4.53          & 25.12±6.55                                 & 39.48±3.39           & 86.57±7.96          & 22.92±9.29           \\
			Noisier2Noise             & 39.64±0.74          & 95.47±0.61          & 20.91±1.76          & 39.27±0.89          & 94.36±1.01          & 21.86±2.30                                 & 38.79±1.08           & 92.46±1.36          & 23.17±3.42           \\
			Noise2Sim                & 40.31±0.65          & 95.44±0.37          & 19.33±1.43          & 35.46±1.28          & 86.09±3.36          & 34.08±5.09                                 & 37.19±0.55           & 87.57±1.89          & 27.96±1.85           \\
			Noise2detail             & 39.78±1.64          & 94.57±2.16          & 21.24±4.09          & 37.98±1.95          & 91.36±4.27          & 25.92±6.19                                 & 39.85±2.50           & 88.18±5.79          & 21.24±6.59           \\
			ZS-N2N                   & 39.66±1.74          & 94.09±2.56          & 22.04±4.41          & 37.69±1.93          & 89.93±4.56          & 26.79±6.38                                 & 39.29±2.67           & 86.61±6.59          & 22.78±7.33           \\
			Pixel2Pixel              & 40.83±1.40          & 96.18±1.63          & 18.60±3.06          & 39.29±1.92          & 93.05±3.71          & 22.26±5.30                                 & 40.89±2.39           & 89.41±5.54          & 18.77±5.90           \\
			FrequencyCT              & \textbf{40.94±1.49} & \textbf{96.74±1.13} & \textbf{18.19±3.22} & \textbf{39.77±1.70} & \textbf{95.35±1.73} & \textbf{20.91±3.57}                        & \textbf{40.99±1.93}  & \textbf{90.32±3.63} & \textbf{18.29±4.18}  \\ 
			\cmidrule{1-10}
			\multirow{2}{*}{Method} & \multicolumn{3}{c|}{Mayo2020 Dataset - Ultra Low Dose}           & \multicolumn{3}{c|}{\textcolor[rgb]{0.125,0.122,0.125}{CTSpine1K}~Dataset - 10\% Dose} & \multicolumn{3}{c}{LIDC-IDRI Dataset - 10\% Dose}                  \\ 
			\cline{2-10}
			& PSNR(dB)~$\uparrow$ & SSIM(\%)~$\uparrow$ & RMSE~$\downarrow$   & PSNR(dB)~$\uparrow$ & SSIM(\%)~$\uparrow$ & RMSE~$\downarrow$                          & PSNR(dB)~$\uparrow$~ & SSIM(\%)~$\uparrow$ & RMSE~$\downarrow$    \\ 
			\hline
			FBP                      & 26.28±2.14          & 55.62±6.83          & 100.24±28.01        & 30.39±2.10          & 60.62±9.29          & 62.22±15.08                                & 33.10±4.29           & 68.18±16.01         & 50.31±27.96          \\
			BM3D                     & 29.12±1.45          & 69.92±6.76          & 65.34±20.61         & 35.22±3.54          & 77.24±6.40          & 38.01±17.97                                & 38.20±4.50           & 83.33±10.36         & 28.75±24.26          \\
			B2U                      & 25.49±1.36          & 48.10±6.28          & 107.48±16.86        & 32.84±2.37          & 70.63±7.06          & 47.36±14.17                                & 35.88±4.22           & 77.48±12.12         & 36.81±23.50          \\
			Neighbor2Neighbor        & 31.83±2.65          & 76.43±5.12          & 54.17±22.25         & 32.78±2.34          & 68.76±7.74          & 47.62±13.59                                & 36.45±4.63           & 77.57±12.78         & 35.18±23.26          \\
			Noisier2Noise             & 30.40±3.29          & 72.52±6.82          & 65.40±30.61         & 35.94±3.50          & 85.83±5.34          & 34.92±17.02                                & 28.80±3.86           & 82.81±5.01          & 79.40±30.81          \\
			Noise2Sim                & 32.80±1.96          & 76.64±4.39          & 47.13±13.34         & 37.85±1.17          & 85.79±4.19          & 25.83±3.65                                 & 33.30±2.43           & 77.69±13.16         & 45.27±16.66          \\
			Noise2detail             & 31.10±1.66          & 73.79±4.57          & 64.53±12.23         & 33.71±2.50          & 72.67±6.26          & 43.13±13.93                                & 36.94±3.76           & 80.22±10.32         & 31.91±18.82          \\
			ZS-N2N                   & 29.56±1.54          & 73.78±4.87          & 66.34±11.69         & 33.17±2.36          & 70.57±5.96          & 45.67±13.92                                & 36.35±3.66           & 78.23±10.55         & 33.77±19.08          \\
			Pixel2Pixel              & 30.94±3.22          & 73.09±6.15          & 61.56±30.94         & 34.67±2.92          & 74.55±6.03          & 39.24±15.12                                & 37.93±4.13           & 81.29±10.74         & 29.09±19.74          \\
			FrequencyCT              & \textbf{33.01±1.50} & \textbf{76.56±4.89} & \textbf{45.36±8.21} & \textbf{38.59±0.95} & \textbf{86.59±1.78} & \textbf{23.65±2.57}                        & \textbf{39.39±1.50}  & \textbf{88.36±2.72} & \textbf{21.81±4.43}  \\
			\bottomrule
		\end{tabular}
	}
\end{table*}
\begin{figure*}[!htbp]
	\centering
	\includegraphics[width=\linewidth]{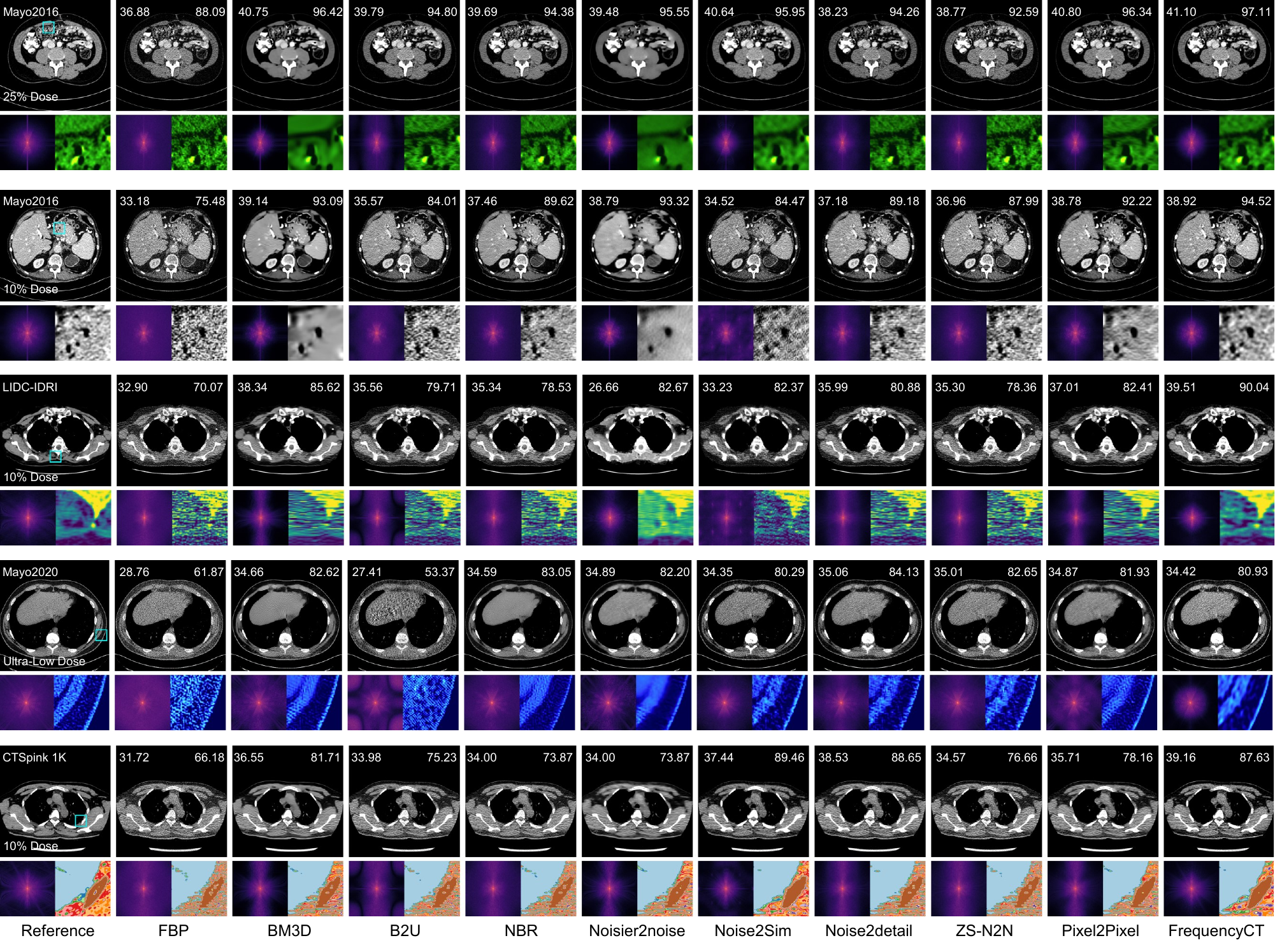}
	\caption{Quantitative results of denoising from different doses and simulation data. The blue boxes in the figure represent the regions of interest for each image, and the lower-left corner shows the performance of the denoised images in the frequency domain.}
	\label{fig6}
\end{figure*}

During inference, driven by $f_{\theta^\ast}\left(\cdot\right)$, it can generalize to the full-range projection domain, achieving high-quality denoising. For a detailed analysis, please refer to the Appendix.
\section{Experiments}
\label{Experiments}
In this section, we begin by introducing the dataset used (Sec. \ref{Datasets}), along with implementation details, comparison methods, and evaluation metrics (Sec. \ref{Implementation Details}). We then present results on simulated and real data (Sec. \ref{Experimental results}), and conclude with an overview of the ablation study conducted on our method (Sec. \ref{Ablation Studies}).
\begin{figure*}[!htbp]
	\centering
	\includegraphics[width=\linewidth]{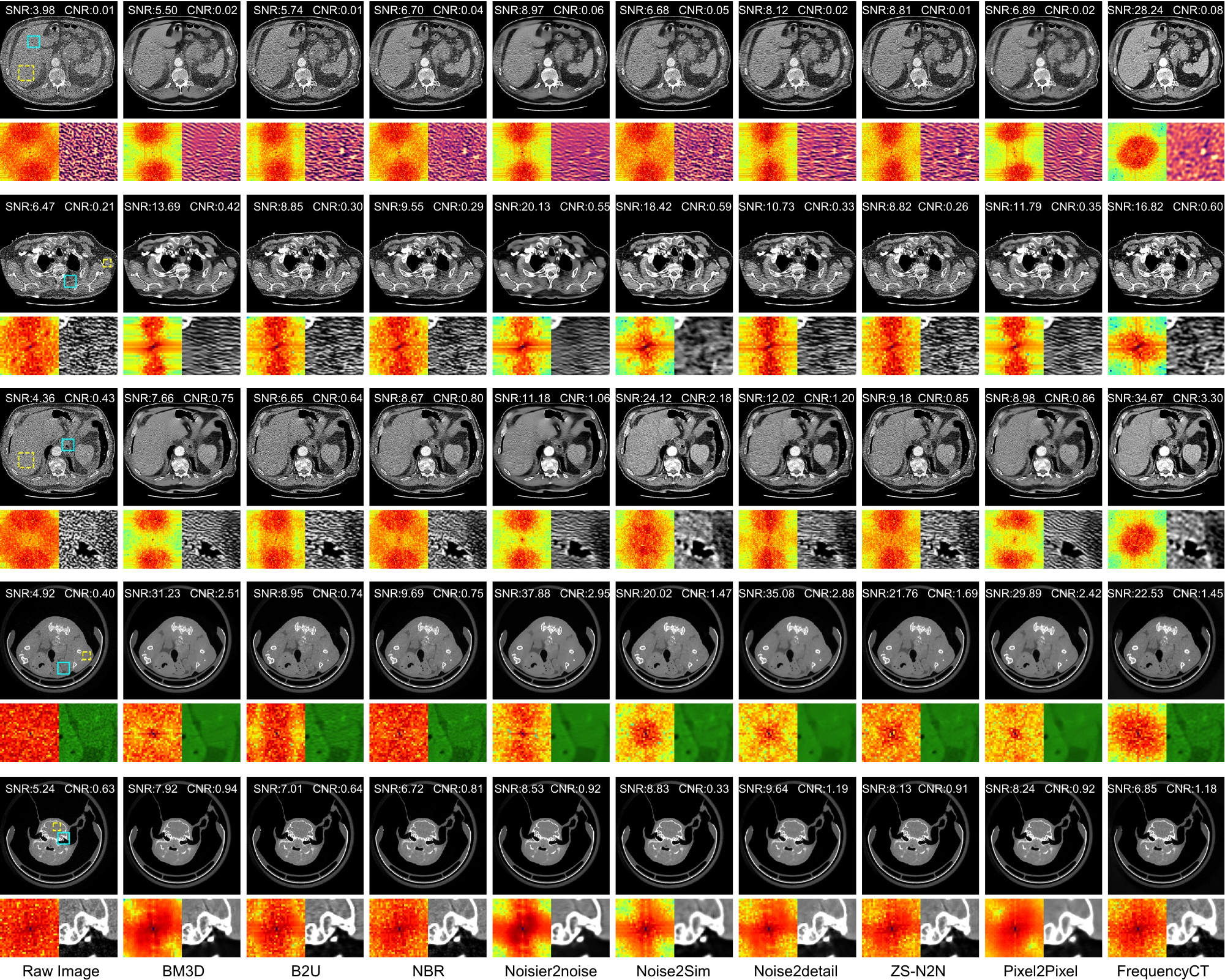}
	\caption{Comparison of real-world human and mouse data with different methods. SNR and CNR were calculated for each experimental result, and the yellow dashed line represents the background region used in the calculation, and the blue line represents the region of interest. Additionally, the lower left corner of each image displays NPS to demonstrate the superiority of FrequencyCT.}
	\label{fig5}
\end{figure*}
\subsection{Experimental Settings}
\label{Experimental Settings}
\subsubsection{Datasets}
\label{Datasets}
To fully demonstrate the effectiveness of the method, this study selected a large amount of simulated data and collected real data. For the simulated data, we selected Mayo2016 \cite{chen2016open}, LIDC-IDRI \cite{armato2011lung}, CTSpine1k \cite{deng2021ctspine1k}, and Mayo2020 \cite{moen2021low} data, with Mayo2020 being used as the ultra-low dose. We used Torch-radon \cite{ronchetti2020torchradon} and ODL \cite{adler2017operator} tools to simulate low-dose CT data at 25\% and 10\% doses, with the simulation method referenced \cite{xie2017robust}. The projection domain contained 1440 angles and 720 detector pixels. For Mayo2020, no simulation was performed; instead, publicly available real noisy data was used as the ultra-low dose. For the real data, with 0.2 mA as the standard dose, we collected dual-energy data from mice at 50\% and 25\% doses, with energy ranges of 20-31 keV and 31-70 keV, respectively. In addition, low-dose data from several real patients were randomly selected from the LIDC-IDRI data for clinical research.
\subsubsection{Implementation Details}
\label{Implementation Details}
The experiments of this work were implemented using the PyTorch framework on an NVIDIA RTX 4090D GPU (24GB). During training, the Adam optimizer was used for 1500 epochs with an initial learning rate of 0.001. For the hyperparameters of pseudo-sample generation, we set $n=4$ and $\left(r_1,r_2\right)=(0.5,0.6)$.

To comprehensively evaluate our method, we construct a comparative baseline encompassing traditional, pre-trained self-supervised, and zero-shot denoising approaches. The selected baseline methods include BM3D \cite{dabov2007image}, Blind2Unblind \cite{wang2022blind2unblind}, denoted as B2U, Neighbor2Neighbor \cite{huang2021neighbor2neighbor}, denoted as NBR, Noisier2Noise \cite{moran2020noisier2noise}, Noise2Sim \cite{niu2022noise}, Noise2detail \cite{chobola2025lightweight}, ZS-N2N \cite{mansour2023zero}, and Pixel2Pixel\cite{ma2025pixel2pixel}. For methods requiring pre-training, the datasets are partitioned by allocating eight Mayo2016 patients for training and two for testing, alongside 49 random training cases from LIDC-IDRI and 38 from CTSpine 1K. Regarding the quantitative evaluation metrics, the assessment of simulated data employs Peak Signal-to-Noise Ratio (PSNR), Structural Similarity Index Measure (SSIM), and Root Mean Square Error (RMSE). Conversely, the evaluation of real clinical data relies on Signal-to-Noise Ratio (SNR), Contrast-to-Noise Ratio (CNR), and Noise Power Spectrum (NPS) due to the inherent absence of reference labels. All quantitative metrics are rigorously computed within a predefined numerical interval of $[-1024, 3072]$.
\subsection{Experimental Results}
\label{Experimental results}
\subsubsection{Results on Simulation Data}
To fully verify the leading advantages of the frequency domain pseudo-sample generation mechanism and projection data truncation strategy in the field of denoising, we constructed a large-scale simulation dataset for comprehensive comparative evaluation. Although this study mainly relies on the Poisson distribution assumption in the method derivation stage, we deliberately introduced a more complex and physically realistic Poisson and Gaussian mixture noise model in the simulation experiments to test the generalization robustness of the model. As demonstrated by the quantitative metrics in Table \ref{table1} and the visual comparisons in Fig. \ref{fig6}, FrequencyCT exhibits distinctly superior performance across all evaluation dimensions. Furthermore, we observed that as the radiation dose decreases, mainstream methods, including B2U, NBR, Noisier2Noise, and Pixel2Pixel, suffer varying degrees of performance degradation. This degradation is intuitively manifested as excessive smoothing of local textures, blurring of anatomical structure edges, incomplete removal of complex noise, and even the emergence of new noise morphologies. Fundamentally, this occurs because escalating noise levels intensify spatial noise correlations. These strong correlations destroy the independence of local features, rendering the optimization gradients severely unstable and oscillatory during network training.
\subsubsection{Result on Real-world Data}
Current denoising research largely relies on simulated data to evaluate method performance. However, noise in real-world physical imaging environments is extremely complex, often exhibiting a mixture of Poisson and Gaussian distributions, and even more complex non-stationary statistical characteristics. This poses a significant challenge to the generalization ability of existing methods on real data. As shown in Fig. \ref{fig5}, FrequencyCT significantly outperforms other baselines in both visual perception and quantification metrics on real human data. Specifically, while traditional methods like BM3D possess some noise suppression capabilities, they inevitably result in severe loss of texture details. Among models that rely on additional data for pre-training, Noise2Sim, while leading in performance, still causes edge blurring, and its network training time is high, with performance dependent on the distribution of neighboring batches. In contrast, FrequencyCT achieves superior image reconstruction quality without any external prior data. Furthermore, for mouse dual-energy CT data, the fidelity of soft tissue details and structural edges is particularly critical. Some existing methods often sacrifice signal-to-noise ratio by excessively smoothing image details. For example, the Pixel2Pixel model, based on the assumption of local similarity, is prone to causing excessive smoothing of anatomical structures when dealing with through-line noise with strong spatial correlation. FrequencyCT, on the other hand, successfully avoids the above-mentioned defects, achieving a balance between eliminating complex background noise and preserving realistic tissue features. In contrast, FrequencyCT demonstrates superior anti-interference capabilities. During the generation of pseudo-samples, strict phase preservation and orthogonal-like perturbation in high-frequency regions effectively eliminate spatial correlations of noise. Simultaneously, consistent data truncation rigorously isolates extreme statistical outliers, significantly alleviating the burden of gradient updates during network optimization. This allows the model to penetrate dense noise fog, accurately perceive and reconstruct anatomical features, thus maintaining extremely high fidelity even under extreme imaging conditions.
\subsection{Ablation Study}
\label{Ablation Studies}
In this subsection, we first introduce the comparison of the objective function before and after truncation (Sec. \ref{E||}), and then introduce the hyperparameters and computation time (Sec. \ref{Hyperparameters and computation time}).
\subsubsection{$\mathbb{E}\parallel f_\theta(p_{ld}^{mask})-p_{ld}^{noise}\parallel_2^2~\mathrm{vs.}~\mathbb{E}\parallel f_\theta(p_{ld_c}^{mask})-p_{ld_c}^{noise}\parallel_2^2$}
\label{E||}
This work does not deny the effectiveness of research that uses full-range values such as $\mathbb{E}\parallel f_\theta\left(p_{ld}^{mask}\right)-p_{ld}^{noise}\parallel_2^2$ as objective functions for denoising. Rather, it proposes a novel paradigm to highlight that truncating both the input and the target in LDCT projection domain facilitates network optimization. Therefore, to demonstrate the superiority of truncated training, validation was performed using multiple simulated datasets with a 10\% dose. As shown in Tables \ref{table1} and \ref{table2}, even when using the objective function in Eq. \eqref{eq6}, our method outperforms most comparison methods. Furthermore, Table \ref{table2} clearly demonstrates that via Eq. \eqref{eq8} results in an average increase of 1.8 dB in PSNR, a 4.9\% increase in SSIM, and a 7 decrease in RMSE. This not only illustrates the effectiveness of generating pseudo-samples through noise and mask perturbations for denoising but also highlights how truncation training elevates network optimization to a new level.
\begin{table}[!htbp]
	\centering
	\caption{Quantitative evaluation of the performance of different objective functions across different datasets.}
	\label{table2}
	\resizebox{\linewidth}{!}{%
		\begin{tabular}{cccc|ccc} 
			\toprule
			\multirow{2}{*}{Dataset} & \multicolumn{3}{c|}{$\mathbb{E}\parallel f_\theta(p_{ld}^{mask})-p_{ld}^{noise}\parallel_2^2$} & \multicolumn{3}{c}{$\mathbb{E}\parallel f_\theta(p_{ld_c}^{mask})-p_{ld_c}^{noise}\parallel_2^2$}  \\ 
			\cline{2-7}
			& PSNR(dB)~$\uparrow$ & SSIM(\%)~$\uparrow$ & RMSE~$\downarrow$                                   & PSNR(dB)~$\uparrow$  & SSIM(\%)~$\uparrow$  & RMSE~$\downarrow$                                     \\ 
			\hline
			Mayo2016                  & 38.91±2.28         & 93.52±5.45         & 23.56±7.41                                         & \textbf{39.77±1.70} & \textbf{95.35±1.73} & \textbf{20.91±3.57}                                  \\
			LIDC-IDRI                 & 37.47±4.01         & 82.56±11.33        & 30.95±23.58                                        & \textbf{39.39±1.50} & \textbf{88.36±2.72} & \textbf{21.81±4.43}                                  \\
			CTSpine 1K                & 34.96±3.23         & 76.28±9.07         & 38.44±16.18                                        & \textbf{38.59±0.95} & \textbf{86.59±1.78} & \textbf{23.65±2.57}                                  \\
			Mayo2020                  & 32.51±1.91         & 74.95±5.71         & 48.66±12.39                                        & \textbf{33.01±1.50} & \textbf{76.56±4.89} & \textbf{45.36±8.21}                                  \\
			\bottomrule
		\end{tabular}
	}
\end{table}
\subsubsection{Hyperparameters and Computation Time}
\label{Hyperparameters and computation time}
\begin{table}[!htbp]
	\centering
	\caption{Ablation study on the hyper-parameters $n$ and $R_{rand}$.}
	\label{tab:ablation_nR}
	\resizebox{\linewidth}{!}{%
		\begin{tabular}{ccccc}
			\toprule
			\multicolumn{2}{c}{Hyper-parameters} & \multirow{2}{*}{PSNR (dB) $\uparrow$} & \multirow{2}{*}{SSIM (\%) $\uparrow$} & \multirow{2}{*}{RMSE $\downarrow$}\\
			\cmidrule(r){1-2} % 仅在超参数列下画一条短横线
			Fixed & Varying & & \\
			\midrule
			% 第一组：固定 R，变化 n
			\multirow{4}{*}{$R_{rand} = (0.5, 0.6)$} % 假设固定 R 为 (0.5, 0.6)，请替换为您真实的固定值
			& $n=2$ &39.70±1.51  &95.33±1.76  &20.97±3.77\\
			& $n=4$ & \textbf{39.77±1.70} & \textbf{95.35±1.73} & 20.91±3.57\\
			& $n=8$ &39.72±1.40  &95.31±1.64  &20.92±3.48\\
			& $n=10$ &39.74±1.45  &95.31±1.70  &\textbf{20.88±3.59}\\
			\midrule
			% 第二组：固定 n，变化 R
			\multirow{4}{*}{$n=4$} 
			& $R=(0.3, 0.5)$ &39.38±1.08  &\textbf{95.90±0.97}  &21.63±2.66\\
			& $R=(0.4, 0.5)$ &39.71±1.31  &95.64±1.43  &20.93±3.19\\
			& $R=(0.5, 0.6)$ & \textbf{39.77±1.70} & 95.35±1.73 & \textbf{20.91±3.57}\\
			& $R=(0.6, 0.8)$ &39.60±1.56  &94.97±2.07  &21.28±4.00\\
			\bottomrule
		\end{tabular}
	}
\end{table}
\begin{table}[!htbp] 
	\centering
	\caption{Efficiency comparison of different methods.}
	\label{tab:efficiency}
	\footnotesize 
	\setlength{\tabcolsep}{3pt} % 进一步缩小列间距以适应单栏
	\resizebox{\linewidth}{!}{ % 强制缩放至单栏文本宽度
		\begin{tabular}{cccccccccc}
			\toprule
			Method & BM3D & B2U & NBR & Noisier2noise & Noise2Sim & Noise2detail & ZS-N2N & Pixel2Pixel & FrequencyCT \\
			\midrule
			Time (s) $\downarrow$   & 231 & 69600 & 7200 & 29580 & 13500 & 125  & 69            & 52   & \textbf{30} \\
			Params (M) $\downarrow$ & -   & 0.07  & 1.28 & 0.56  & 4.19  & 0.02 & \textbf{0.02} & 0.15 & 0.15 \\
			Mem (GB) $\downarrow$   & -   & 18    & 4    & 11    & 8     & 1.4  & \textbf{0.8}  & 5.5  & 3    \\
			\bottomrule
		\end{tabular}
	}
\end{table}
To demonstrate the rationality of the hyperparameter design, this study was conducted on the Mayo2016 dataset with a 10\% dose, and the specific settings are shown in Table \ref{tab:ablation_nR}. The table shows that different hyperparameters have a relatively small impact on the increase in metrics, indicating the robustness of FrequencyCT. Furthermore, to highlight the processing speed of this method, Table \ref{tab:efficiency} displays the processing time, network parameters, and memory usage. Note that the time is the sum of the total training time and the average testing time.
\section{Discussion}
\label{Discussion and Conclusion}
The core of this work is the innovative concept of generating pseudo-samples directly in the frequency domain. Because the generated samples are derived from the essential frequency behaviors of both noise and true signal, we do not assume noise independence. Instead, we apply an orthogonal perturbation operator to break the inevitable noise correlations. Furthermore, unlike conventional studies that train on the full numerical range of the samples, we accurately identify the changing trend of noise variance in the projection domain and provide the corresponding theoretical proofs. We then apply a truncation process to both the randomly selected inputs and the supervision targets. Although this truncation strategy appears bold, we mathematically prove its feasibility. Extensive experimental results further confirm its effectiveness. In summary, FrequencyCT enriches the methodology of sample generation and improves the interpretability of the model optimization process.

Additionally, readers should note that denoising performance will inevitably degrade under extremely low-dose conditions. This degradation occurs primarily because increasingly complex noise distributions lead to stronger noise correlations. This observation also reveals a limitation of our current method. Under severe noise conditions, the low-frequency regions of the frequency domain will inevitably experience noise interference. Consequently, the reconstructed images will retain some residual noise that cannot be entirely removed. Overcoming this specific challenge will be a primary focus for future research. At the same time, the hyperparameters $n$ and $\left(r_1, r_2\right)$ proposed in this work may not represent the optimal strategy. In future work, we plan to explore simpler pseudo-sample generation strategies and investigate the deeper physical relationships between noise and anatomical structures in each projection data point, with the goal of making the hyperparameters more intelligent and adaptive. Medical imaging must consider not only imaging quality but also timeliness for clinical applications. Although this work falls under the zero-shot category and does not require pre-training, it, like many other methods, processes two-dimensional data. To address this, we propose a concept: samples generated based on the sequence of human body scans can be used to construct pseudo-3D samples, which can significantly improve processing speed—a goal we aim to achieve in the future study. In summary, we believe the frequency domain represents a promising area for research in the field of denoising.
\section{Conclusion}
\label{Conclusion}
In this work, we proposed a novel zero-shot self-supervised method named FrequencyCT. Recognizing the properties of noise in the frequency domain, it innovatively adopted phase-preserving noise and mask perturbation to generate pseudo-samples, which greatly mitigated noise correlation. Meanwhile, through a theoretical analysis of the exponential correlation between noise variance of noisy projections and the underlying true signal, we truncated the generated samples before training to enhance network optimization. Both theoretical and experimental results validated the feasibility of this method for full-range inference.
\appendix
\label{Appendix}
To further illustrate the convergence of truncated training and the positive scaling of the inference to the full numerical range, we propose the following proposition and its proof.

\emph{Proposition 1:} Let $\Omega_{\text{full}} = \{ y \mid y \in [0, V] \}$ be the full numerical range of LDCT projection data, with truncated subset $\Omega_{\text{clamp}} = \{ y \mid y \in [0, T] \}$. For independent noisy observations $y_i$ and corresponding true targets $p_i$, the denoising network $f_\theta(\cdot)$ optimized on $\Omega_{\text{clamp}}$ is guaranteed to converge to the optimal operator $f_{\theta^*}(\cdot)$.

\begin{proof}
According to the Central Limit Theorem, the Poisson distribution asymptotically approximates a Gaussian distribution given a statistically significant photon count \cite{fessler1994penalized}.
	
Given $y_i$ and $f_\theta$, its log-likelihood function $L(\cdot)$ can be described as:
\begin{equation}
		\ln L(\theta)=-\frac{1}{2}\sum_{i}\frac{1}{\sigma_{i}^{2}}(f_{\theta}(y_{i})-p_{i})^{2}+C,
		\label{eq18}
\end{equation}
where $C$ is a constant, maximizing $L$ means minimizing the weighted least squares empirical risk:
\begin{equation}
		\min_{\theta}\mathcal{R}_{WLS}(\theta)=\sum_{i}w_{i}(f_{\theta}(y_{i})-p_{i})^{2},
		\label{eq19}
\end{equation}
comparing Eq. \eqref{eq18} and \eqref{eq19}, we find that the theoretically optimal weights are:
\begin{equation}
		w_i^*=\frac{1}{\sigma_i^2},
		\label{eq20}
\end{equation}
substituting Eq. \eqref{eq9} into the above equation, it yields:
\begin{equation}
		w_i^*=\frac{1}{\sigma_i^2}\approx\frac{I_0}{\exp(p_i)}.
		\label{eq21}
\end{equation}
The above equation reveals that as $\sigma_i^2 \rightarrow \infty$, $w_i^\ast \rightarrow 0$, consistent with Theorem~\ref{theorem} ($\sigma_i^2 \approx \exp(p_i)/I_0$): A larger $p_i$ implies a larger noise variance and thus a smaller optimization weight. This consistency also indicates that the operation in Eq.~\eqref{eq7} conforms to the theory: The truncated subset $\Omega_{\text{clamp}}$ isolates severe noise fluctuations, steering the network toward the optimal convergence operator $f_{\theta^\ast}(\cdot)$.
\end{proof}

\emph{Proposition 2:} During the inference, the subset $\Omega_{\text{clamp}}$ and its associated optimal network $f_{\theta^\ast}(\cdot)$ can be strictly and forwardly scaled to the entire projection space $\Omega_{\text{full}}$.

\begin{proof}
Assume the network parameters are $M$, where the weight matrix of the $m$-th layer is denoted as $W_m$. As described in Section \ref{Training and Inference}, this network architecture is predominantly convolutional and exclusively utilizes non-saturating activation functions $\sigma(\cdot)$, satisfying $\forall\alpha>0$, $\sigma(\alpha x)=\alpha\sigma(x)$. Moreover, the convolution operation is linear: For any input tensor $X$, $W \ast (\alpha X) = \alpha (W \ast X)$. Then the network flow satisfies:
\begin{equation}
		\begin{aligned}
			f_{\theta}(\alpha X) & =W_{M}*\sigma(...\alpha\cdot\sigma(W_{1}*X)) \\
			& =\alpha[W_{M}*\sigma(...\sigma(W_{1}*X))] =\alpha f_{\theta}(X).
		\end{aligned}
		\label{eq22}
\end{equation}
Therefore, for a full-range input $y \in \Omega_{\text{full}}$, let $y_{y>T}$ denote the components exceeding $T$ and $y_{y\le T} \in \Omega_{\text{clamp}}$ those not exceeding $T$. Then there must exists $\alpha > 1$ such that:
\begin{equation}
		y_{y>T}=\alpha y_{y\leq T},
		\label{eq23}
\end{equation}
according to Eq. \eqref{eq22}, there holds:
\begin{equation}
		f_{\theta^*}\left(y_{y>T}\right)=f_{\theta^*}\left(\alpha y_{y\leq T}\right)=\alpha f_{\theta^*}\left(y_{y\leq T}\right).
		\label{eq24}
\end{equation}
As the above analysis shows, under the operation of $f_{\theta^\ast}\left(\cdot\right)$, if $\Omega_{\text{clamp}}$ converges, then $y_{y>T}$ also converges, hence $\Omega_{\text{full}}$ converges. Additionally, setting the threshold to $T=1$ aligns with the underlying physical characteristics of the task. While deep learning vision architectures conventionally employ a symmetric normalized training range of $[-1, 1]$, the physical attenuation mechanisms of CT projection domain inherently preclude negative values. To accommodate this strict non-negativity while providing an optimal numerical scale for neural network optimization, establishing $T=1$ serves as a mathematically and physically consistent boundary for the truncated subset.
\end{proof}

%\bibliographystyle{ieeetr}
%\bibliography{main}

\end{document}